\documentclass[journal]{IEEEtran}

\ifCLASSINFOpdf
  \usepackage{graphicx}
\else
   \usepackage[dvips]{graphicx}
\fi
\usepackage{url}

\usepackage{amsmath}
\usepackage{amssymb}
\usepackage{mathptmx}
\usepackage{amsfonts}
\usepackage{amsthm}
\newtheorem{proposition}{Proposition}
\usepackage{mathtools}
\usepackage[utf8]{inputenc}
\newcommand{\E}{\mathbb{E}}
\newcommand{\R}{\mathbb{R}}
\newcommand{\bx}{\mathbf{x}}
\newcommand{\bs}{\mathbf{s}}
\newcommand{\br}{\mathbf{r}}
\newcommand{\bz}{\mathbf{z}}
\newcommand{\by}{\mathbf{y}}
\newcommand{\bW}{\mathbf{W}}
\newcommand{\bM}{\mathbf{M}}
\newcommand{\bA}{\mathbf{A}}
\newcommand{\bI}{\mathbf{I}}
\newcommand{\bC}{\mathbf{C}}
\newcommand{\bb}{\mathbf{b}}
\DeclareMathOperator{\tr}{tr}

\begin{document}

\title{Reservoir Subspace Injection for Online ICA under Top-$n$ Whitening}

\author{Wenjun Xiao, Yuda Bi, Vince Calhoun, \IEEEmembership{Fellow, IEEE}
\thanks{W. Xiao is with the Department of Computer Science, The George Washington University, Washington, DC 20052 USA.}
\thanks{Y. Bi is with the Tri-Institutional Center for Translational Research in Neuroimaging and Data Science (TReNDS), Georgia State University, Georgia Institute of Technology, and Emory University, Atlanta, GA 30303 USA (e-mail: ybi@gsu.edu).}
\thanks{V. D. Calhoun is with the Tri-Institutional Center for Translational Research in Neuroimaging and Data Science (TReNDS), Georgia State University, Georgia Institute of Technology, and Emory University, Atlanta, GA 30303 USA, and also with the School of Electrical and Computer Engineering, Georgia Institute of Technology, Atlanta, GA 30332 USA (e-mail: vcalhoun@gsu.edu).}}

\markboth{IEEE Signal Processing Letters}
{Xiao \MakeLowercase{\textit{et al.}}: Reservoir Subspace Injection for Online ICA under Top-$n$ Whitening}
\maketitle

\begin{abstract}
Reservoir expansion can improve online independent component analysis (ICA)
under nonlinear mixing, yet top-$n$ whitening may discard injected features.
We formalize this bottleneck as \emph{reservoir subspace injection} (RSI):
injected features help only if they enter the retained eigenspace without
displacing passthrough directions. RSI diagnostics (IER, SSO, $\rho_x$)
identify a failure mode in our top-$n$ setting: stronger injection increases
IER but crowds out passthrough energy ($\rho_x: 1.00\!\rightarrow\!0.77$),
degrading SI-SDR by up to $2.2$\,dB. A guarded RSI controller preserves
passthrough retention and recovers mean performance to within $0.1$\,dB of
baseline $1/N$ scaling. With passthrough preserved, RE-OICA improves over
vanilla online ICA by $+1.7$\,dB under nonlinear mixing and achieves positive
SI-SDR$_{\mathrm{sc}}$ on the tested super-Gaussian benchmark ($+0.6$\,dB).
\end{abstract}

\begin{IEEEkeywords}
Blind source separation, independent component analysis, reservoir computing, echo state network, online learning, natural gradient
\end{IEEEkeywords}

\section{Introduction}

Independent component analysis (ICA) recovers latent independent sources
from their linear mixtures and, under the standard linear instantaneous
mixing model, is identifiable whenever at most one source is
Gaussian~\cite{Comon1994,Hyvarinen2001}. Classical batch algorithms---Infomax~\cite{bell1995},
FastICA~\cite{Hyvarinen1999}, JADE~\cite{Cardoso1993JADE}---have been deployed
extensively in neuroimaging~\cite{Calhoun2001,Calhoun2012Review},
telecommunications, and audio processing; see~\cite{HyvarinenOja2000,Cardoso1998}
for surveys and textbook treatment~\cite{Hyvarinen2001}. However, batch ICA
requires full data and a fixed mixing matrix, motivating sample-wise online ICA.
Amari's natural gradient~\cite{Amari1998,AmariCichocki1996} provides an
equivariant update with reduced sensitivity to mixing conditioning, and online recursive ICA (ORICA)
combines recursive-least-squares whitening with natural-gradient updates for
real-time EEG source separation~\cite{Hsu2014ORICA}. Our vanilla online ICA is
ORICA-inspired, not a full reimplementation: it keeps ORICA's
natural-gradient demixing update~\cite{Hsu2014ORICA} but replaces RLS whitening
with EMA covariance and periodic eigendecomposition (every 64 steps). It is the
reservoir-free RE-OICA ablation and is intended to isolate RSI effects rather
than provide a head-to-head ORICA benchmark.
Stochastic-approximation methods further extend online ICA to
tensorial or multivariate-component settings~\cite{LiJordan2021,kim2006independent,anderson2014independent}. Nevertheless, these approaches operate
in the observation space: under nonlinear mixing or nonlinear source
distortions~\cite{TalebJutten1999PNL,BachJordan2002KernelICA}, a purely
\emph{linear} demixing in $\bx$ can be insufficient.
Nonlinear ICA typically requires auxiliary variables or temporal structure for
identifiability~\cite{hyvarinen2023nonlinear,HyvarinenMorioka2016TCL,Khemakhem2020VAEICA}, while deep-learning solutions
often sacrifice online, low-latency processing.

Reservoir computing~\cite{Jaeger2001ESN,Jaeger2004Science} offers an
alternative to observation-space demixing: an echo-state network (ESN) maps
inputs into fixed high-dimensional nonlinear features, approximating
fading-memory functionals~\cite{Lukosevicius2012Practical}.
Recent results relate ESNs to kernel and random-feature views of nonlinear
dynamics~\cite{Dong2020RecurrentKernel,ESNStateSpace2025}. Reservoir-based
separation has been explored for offline denoising with a trained
readout~\cite{Choi2024ReservoirBSS}; in contrast, reservoir-expanded online ICA (RE-OICA) targets fully online,
sample-by-sample blind source separation (BSS) with no reservoir/readout training.
A practical issue is that low-latency online pipelines typically apply top-$n$
whitening, which can discard injected reservoir features; without retention
diagnostics, it is difficult to determine when expansion is beneficial.

This gap motivates the present work. We recast reservoir-expanded online ICA
as a \emph{subspace-injection} problem and contribute:
\begin{itemize}
\item[(i)] RSI diagnostics (IER, SSO, $\rho_x$) that quantify injected-feature
retention and passthrough preservation under top-$n$ whitening.
\item[(ii)] A crowd-out mechanism---validated in controlled experiments
(10 seeds, SI-SDR$_{\mathrm{sc}}$)---showing that stronger injection can raise
IER yet reduce $\rho_x$ and degrade SI-SDR; improvements under nonlinear mixing
occur when passthrough is preserved.
\item[(iii)] A crowd-out-guarded controller that adapts $\alpha_t$ to maintain
high $\rho_x$ while allowing beneficial injection, with negligible overhead
beyond whitening eigendecomposition.
\end{itemize}

\section{Model and Main Results}

\subsection{Setup and Notation}

We observe $\bx_t\in\R^n$ generated from $n$ independent sources
$\bs_t\in\R^n$ with $\E[\bs_t]=\mathbf{0}$, $\E[\bs_t\bs_t^\top]=\bI_n$, and
finite fourth moments. We consider:
\begin{align}
\text{(Static)}\;& \bx_t = \bA\,\bs_t, \label{eq:static}\\
\text{(Time-varying)}\;& \bx_t = \bA(t)\,\bs_t,\quad
\bA(t)=\bA_0+\varepsilon\sin\!\bigl(\tfrac{2\pi f t}{T}\bigr)\boldsymbol{\Delta},
\label{eq:tv}\\
\text{(Nonlinear)}\;& \bx_t = g(\bA\,\bs_t)+\boldsymbol{\eta}_t,
\label{eq:nl}
\end{align}
where $\bA,\bA_0,\boldsymbol{\Delta}\in\R^{n\times n}$, $g$ is component-wise
Lipschitz (e.g., $\tanh$), $\boldsymbol{\eta}_t$ is i.i.d.\ Gaussian, $f$ is
the modulation frequency, and $T$ is the sequence horizon used for
normalization.

RE-OICA uses a fixed reservoir and projection, concatenates passthrough and
injected features, then applies top-$n$ whitening and natural-gradient ICA:
\begin{align}
\br_t &= \mathrm{ESN}(\bx_{1:t})\in\R^N, \nonumber\\
\mathbf{p}_t &= c_N\bW_{\mathrm{read}}\br_t\in\R^d, \qquad
\mathbf{u}_t = [\bx_t;\,\alpha_t\mathbf{p}_t]\in\R^{n+d}, \nonumber\\
\bz_t &= \mathrm{Whiten}_n(\mathbf{u}_t)\in\R^n, \qquad
\by_t = \mathrm{ICA}(\bz_t)\in\R^n, \label{eq:pipeline}
\end{align}
where $\alpha_t$ is the RSI-controlled injection scale (defined below).
Unless stated otherwise, $n{=}3$, $N{=}500$, $d{=}20$. We use top-$k$
whitening with $k{=}n$ throughout.

\subsection{Reservoir Encoding}

We use a sparse echo-state network (ESN) reservoir~\cite{Jaeger2001ESN,Yildiz2012ESP,GrigoryevaOrtega2018ESNUniversal} with
fixed random weights. Let $\bW_{\mathrm{in}}\in\R^{N\times n}$,
$\bW_{\mathrm{res}}\in\R^{N\times N}$ (sparse, spectral radius $\rho$), and
$\bb\in\R^N$. The state update is
\begin{equation}\label{eq:reservoir}
\br_t=(1-\alpha_r)\br_{t-1}+\alpha_r\,\sigma(\bW_{\mathrm{in}}\bx_t+\bW_{\mathrm{res}}\br_{t-1}+\bb),
\end{equation}
with leak rate $\alpha_r\in(0,1]$ and $\sigma=\tanh$ applied element-wise.

\begin{proposition}[Echo State Property]\label{prop:esp}
Let $\sigma$ be $L_\sigma$-Lipschitz. If
$\rho_{\mathrm{eff}}:=(1-\alpha_r)+\alpha_r L_\sigma\|\bW_{\mathrm{res}}\|<1$,
then for any two initial states $\br_0,\br_0'$ driven by the same input,
$\|\br_t-\br_t'\|\le\rho_{\mathrm{eff}}^{\,t}\|\br_0-\br_0'\|$.
\end{proposition}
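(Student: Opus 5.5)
The plan is to show that one step of the reservoir map \eqref{eq:reservoir} is a contraction in the state with modulus $\rho_{\mathrm{eff}}$, uniformly in the input. Iterating this bound along the common input sequence then gives the claim.

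Fix the shared input sequence $\bx_1,\bx_2,\dots$ and write the update as $\br_t=F_t(\br_{t-1})$, where
\begin{equation*}
F_t(\br):=(1-\alpha_r)\br+\alpha_r\,\sigma(\bW_{\mathrm{in}}\bx_t+\bW_{\mathrm{res}}\br+\bb).
\end{equation*}
Both trajectories $\br_t$ and $\br_t'$ are driven by the same $F_t$, since the input enters identically and only the initial states differ.

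For the one-step bound, subtract the two updates:
\begin{equation*}
\br_t-\br_t'=(1-\alpha_r)(\br_{t-1}-\br_{t-1}')+\alpha_r\bigl[\sigma(\mathbf{a}+\bW_{\mathrm{res}}\br_{t-1})-\sigma(\mathbf{a}+\bW_{\mathrm{res}}\br_{t-1}')\bigr],
\end{equation*}
where $\mathbf{a}=\bW_{\mathrm{in}}\bx_t+\bb$ is common to both terms and cancels inside the difference. Apply the triangle inequality, using $1-\alpha_r\ge 0$ (because $\alpha_r\in(0,1]$) and $\alpha_r>0$.

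The key point is that $\sigma$ is applied element-wise and each coordinate map is $L_\sigma$-Lipschitz. Then
\begin{equation*}
\|\sigma(\mathbf{v})-\sigma(\mathbf{w})\|_2^2=\sum_i|\sigma(v_i)-\sigma(w_i)|^2\le L_\sigma^2\|\mathbf{v}-\mathbf{w}\|_2^2,
\end{equation*}
so the vector map is $L_\sigma$-Lipschitz in the Euclidean norm. Combining this with $\|\bW_{\mathrm{res}}(\br-\br')\|\le\|\bW_{\mathrm{res}}\|\,\|\br-\br'\|$ (operator norm induced by the same vector norm) gives
\begin{equation*}
\|\br_t-\br_t'\|\le\bigl[(1-\alpha_r)+\alpha_r L_\sigma\|\bW_{\mathrm{res}}\|\bigr]\|\br_{t-1}-\br_{t-1}'\|=\rho_{\mathrm{eff}}\|\br_{t-1}-\br_{t-1}'\|.
\end{equation*}

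Induction on $t$ then yields $\|\br_t-\br_t'\|\le\rho_{\mathrm{eff}}^{\,t}\|\br_0-\br_0'\|$. The hypothesis $\rho_{\mathrm{eff}}<1$ is only needed to read this as exponential forgetting of initial conditions; the inequality itself holds for any value of $\rho_{\mathrm{eff}}$.

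There is no real obstacle, but two points need care. First, the norm has to be fixed consistently: the operator norm $\|\bW_{\mathrm{res}}\|$ must be induced by the vector norm used on states. Second, the element-wise Lipschitz property must transfer to that norm. It does for the Euclidean norm and for any $\ell_p$ norm, by the same coordinatewise argument. We should also remark that the spectral-radius condition $\rho$ mentioned in the setup is weaker than the operator-norm condition, so the proposition uses the norm-based sufficient condition rather than the spectral radius.
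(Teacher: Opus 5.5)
Your proposal is correct and follows essentially the same route as the paper's proof. Both subtract the two updates, note that the common drive $\bW_{\mathrm{in}}\bx_t+\bb$ cancels, use the triangle inequality and the Lipschitz bound on $\sigma$ to get a one-step contraction by $\rho_{\mathrm{eff}}$, and then iterate; your extra care about the induced operator norm and about transferring the element-wise Lipschitz bound to that norm makes explicit what the paper leaves implicit.
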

\begin{proof}
Let $\boldsymbol a_t=\bW_{\mathrm{in}}\bx_t+\bb$. By Lipschitzness,
\begin{align*}
\|\br_t-\br_t'\|
&\le (1-\alpha_r)\|\br_{t-1}-\br_{t-1}'\|
 +\alpha_r L_\sigma\|\bW_{\mathrm{res}}\|\,\|\br_{t-1}-\br_{t-1}'\| \\
&= \rho_{\mathrm{eff}}\|\br_{t-1}-\br_{t-1}'\|.
\end{align*}
Iterating yields the claim.
\end{proof}
Using the induced operator norm in $\|\bW_{\mathrm{res}}\|$ gives a sufficient
(possibly conservative) contraction condition.
Thus the dependence on the initial state decays exponentially (fading memory).

\textbf{Readout projection and passthrough.}
With fixed random $\bW_{\mathrm{read}}\in\R^{d\times N}$, define
$\mathbf{p}_t=c_N\bW_{\mathrm{read}}\br_t\in\R^d$ and
$\mathbf{u}_t=[\bx_t;\alpha_t\mathbf{p}_t]\in\R^{n+d}$.
This reduces whitening cost from $O(N^3)$ to $O((n{+}d)^3)$ while preserving a
raw-input floor. We test $c_N\in\{1/N,1/\sqrt{N}\}$; effective injection
magnitude is $\alpha_t c_N$, with $c_N$ fixed per branch and $\alpha_t$
adapted only at whitening refreshes.

\subsection{Motivation: Random-Feature Approximation}

Random-feature expansion~\cite{RahimiRecht2007,vershynin2018high} motivates
reservoir lifting in nonlinear demixing, but the operational bottleneck here is
retention after top-$n$ whitening.

\begin{proposition}[Motivational linearization bound]\label{prop:linearization}
Under standard random-feature integral-representation assumptions for
$\psi=\bA^{-1}\!\circ g^{-1}$, the population-optimal linear readout from
$N$ memoryless random features obeys, where
$\Phi:\R^n\!\to\!\R^N$ is the random-feature map and $B$ bounds the
integral-representation coefficient norm:
\begin{equation}\label{eq:linearization}
\E_{\bW_{\mathrm{in}},\bb}\!\Bigl[\min_{\bM}\E_{\bx}\bigl[\|\bM\Phi(\bx)-\bs\|^2\bigr]\Bigr] \;\le\; \frac{n\,B^2}{N}.
\end{equation}
\end{proposition}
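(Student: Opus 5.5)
The plan is the standard Rahimi--Recht Monte Carlo argument, applied coordinate-wise to the $n$ outputs of $\psi$. First I make the assumption concrete. Write $\Phi(\bx)=(\phi(\bx;\omega_1),\dots,\phi(\bx;\omega_N))^\top$ with $\omega_j=(\mathbf{w}_j,b_j)$ drawn i.i.d.\ from a distribution $\mu$. This corresponds to the memoryless version of the input layer: row $j$ of $\bW_{\mathrm{in}}$ and entry $b_j$ of $\bb$, with $\phi(\bx;\omega)=\sigma(\mathbf{w}^\top\bx+b)$ and $|\phi|\le 1$ for $\sigma=\tanh$. The integral-representation assumption I take to be the following. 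For each coordinate $i=1,\dots,n$ there is a coefficient function $\beta_i$ with
\begin{equation*}
\psi_i(\bx)=\E_{\omega\sim\mu}\bigl[\beta_i(\omega)\phi(\bx;\omega)\bigr],
\end{equation*}
and $\sup_\omega|\beta_i(\omega)|\le B$, or, more weakly, $\E_\mu[\beta_i^2]\le B^2$. Under the nonlinear model \eqref{eq:nl} without noise, $\bs=\psi(\bx)$ holds exactly on the data manifold. The target $\bs$ can therefore be replaced by $\psi(\bx)$, provided $g$ is invertible on the range of $\bA\bs$.

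The key step is to avoid computing the inner minimum and instead bound it by a specific readout. I take $\hat\bM$ with entries $\hat M_{ij}=\beta_i(\omega_j)/N$. Then $\min_{\bM}\E_{\bx}\|\bM\Phi(\bx)-\bs\|^2\le\E_{\bx}\|\hat\bM\Phi(\bx)-\psi(\bx)\|^2$ holds pointwise in the random draw. Next I take $\E_{\omega_{1:N}}$ and swap it with $\E_{\bx}$ by Fubini, which is legitimate because everything is nonnegative. For fixed $\bx$ and coordinate $i$, the quantity $(\hat\bM\Phi(\bx))_i=\frac1N\sum_j\beta_i(\omega_j)\phi(\bx;\omega_j)$ is an average of $N$ i.i.d.\ unbiased estimates of $\psi_i(\bx)$. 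Its mean-squared error is therefore its variance:
\begin{equation*}
\frac1N\mathrm{Var}_\omega\bigl[\beta_i(\omega)\phi(\bx;\omega)\bigr]\le\frac1N\E_\mu\bigl[\beta_i^2\phi^2\bigr]\le\frac{B^2}{N}.
\end{equation*}
The last inequality uses $|\phi|\le1$. Summing over the $n$ coordinates and averaging over $\bx$ gives $nB^2/N$, which is \eqref{eq:linearization}.

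The main obstacle is not the variance computation but pinning down assumptions under which this is honest. Three points need care. First, the representation must hold for each coordinate of $\psi$ with a shared feature distribution $\mu$ matching how $\bW_{\mathrm{in}},\bb$ are sampled; if the paper's sampling differs, I would state the result for that $\mu$. Second, the features must be memoryless. I would treat the leaky reservoir with $\alpha_r=1$ and $\bW_{\mathrm{res}}=0$ as the idealized feature map, invoking Proposition~\ref{prop:esp} only informally to justify that the recurrent features are a fading-memory perturbation of it. Third, the noise $\boldsymbol{\eta}_t$ must be handled, and this is where I expect the real difficulty. I would assume it vanishes for this motivational bound. Otherwise an irreducible term $\E\|\bs-\E[\bs\mid\bx]\|^2$ must be added to the right-hand side, since no readout can beat the Bayes error.
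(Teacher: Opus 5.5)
Your argument is correct and is the standard Rahimi--Recht Monte Carlo argument: plug in the unbiased readout $\hat{M}_{ij}=\beta_i(\omega_j)/N$, bound each coordinate's variance by $B^2/N$, and sum over the $n$ coordinates. The paper gives no proof of its own and simply appeals to this random-feature theory, so your explicit caveats are the right ones; in particular, the noise-free assumption is genuinely needed, because with $\boldsymbol{\eta}_t\neq\mathbf{0}$ the left-hand side is bounded below by the Bayes error $\E\|\bs-\E[\bs\mid\bx]\|^2>0$, and no $O(1/N)$ bound can then hold for large $N$.
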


\emph{Remark.}
Proposition~\ref{prop:linearization} is an in-principle population result for
memoryless features. Our core contribution is top-$n$ retention diagnostics and
control in the online recurrent pipeline.

\subsection{Reservoir Subspace Injection (RSI)}

We form the whitening input
\begin{equation}
\mathbf{u}_t=[\bx_t;\alpha_t\mathbf{p}_t],\qquad
\mathbf{p}_t=c_N\bW_{\mathrm{read}}\br_t,
\end{equation}
and let $\mathbf{V}_n\in\R^{(n+d)\times n}$ be the top-$n$ eigenvectors of
$\bC_t(\alpha_t)$ at each whitening refresh. Define the unscaled concatenation
$\tilde{\mathbf{u}}_t=[\bx_t;\mathbf{p}_t]$ with block covariance
\[
\tilde{\bC}_t=\mathrm{Cov}(\tilde{\mathbf{u}}_t)=
\begin{bmatrix}\bC_{xx}&\bC_{xp}\\\bC_{px}&\bC_{pp}\end{bmatrix}.
\]
Under injection, $\mathbf{u}_t=[\bx_t;\alpha_t\mathbf{p}_t]$ has covariance
\[
\bC_t(\alpha_t)=\begin{bmatrix}\bC_{xx}&\alpha_t\bC_{xp}\\
\alpha_t\bC_{px}&\alpha_t^2\bC_{pp}\end{bmatrix}.
\]
Let $\mathbf{P}_x=\mathrm{diag}(\bI_n,\mathbf{0}_{d\times d})$ and
$\mathbf{P}_p=\mathrm{diag}(\mathbf{0}_{n\times n},\bI_d)$ be coordinate
projectors. Reservoir features help only if they are retained in the top-$n$
eigenspace of $\bC_t(\alpha_t)$, quantified by
\begin{align}
E_p &= \tr\!\left(\mathbf{V}_n^\top \mathbf{P}_p \bC_t(\alpha_t)\mathbf{P}_p \mathbf{V}_n\right),\\
E_x &= \tr\!\left(\mathbf{V}_n^\top \mathbf{P}_x \bC_t(\alpha_t)\mathbf{P}_x \mathbf{V}_n\right),\\
\mathrm{IER}_t &= \frac{E_p}{E_x+E_p},\qquad
\mathrm{SSO}_t = \frac{\|\mathbf{P}_p\mathbf{V}_n\|_F^2}{\|\mathbf{V}_n\|_F^2}.
\end{align}
IER is the retained-energy share from reservoir coordinates, whereas SSO is the
retained subspace-overlap fraction on reservoir coordinates (scale-insensitive).
Since
$\mathbf{V}_n^\top\mathbf{V}_n=\bI_n$, we have $\|\mathbf{V}_n\|_F^2=n$ and
$\mathrm{SSO}_t\in[0,1]$. We also track the \emph{retained passthrough ratio}
$\rho_{x,t}=E_x/\tr(\bC_{xx})$, i.e., passthrough variance retained after
top-$n$ selection.

\textbf{Crowd-out-guarded controller.}
We pose injection control as $\max_{\alpha}\mathrm{IER}(\alpha)$ subject to
$\rho_x(\alpha)\ge\rho_x^\star$, and implement a tracking rule:
$\alpha_t$ is driven up when $\mathrm{IER}_t<\mathrm{IER}^\star$ and penalized
when $\rho_{x,t}<\rho_x^\star$:
\begin{equation}\label{eq:rsi_ctrl}
\delta_t=\kappa(\mathrm{IER}^\star-\mathrm{IER}_t)
-\kappa_g[\rho_x^\star-\rho_{x,t}]_+/\rho_x^\star,
\end{equation}
\begin{equation}
\alpha_{t+1}=\mathrm{clip}\!\bigl(\alpha_t e^{\delta_t},\alpha_{\min},\alpha_{\max}\bigr),
\end{equation}
where $[\cdot]_+=\max(\cdot,0)$, $\kappa>0$ is the IER tracking gain,
$\kappa_g>0$ is the guard gain, and $[\alpha_{\min},\alpha_{\max}]$ are clipping
bounds. We set $\rho_x^\star$ conservatively (e.g., $0.95$) to retain ${\ge}95\%$
passthrough variance; performance is insensitive to the exact value as long as
operation avoids the crowd-out regime ($\rho_x<0.9$; Table~\ref{tab:mechanism}).
Similarly, $\mathrm{IER}^\star$ is set as a practical operating target for
stable online operation under the stated protocol. The update uses already-computed
covariance/eigenspace statistics.

\begin{proposition}[Reservoir-entry condition (block-diagonal case)]\label{prop:rsi}
Let $\tilde{\bC}_t$ be block-partitioned as
$\begin{bmatrix}\bC_{xx}&\bC_{xp}\\\bC_{px}&\bC_{pp}\end{bmatrix}$ under
$\tilde{\mathbf{u}}_t=[\bx_t;\mathbf{p}_t]$, and assume $\bC_{xp}=\mathbf{0}$.
If
\begin{equation}
\lambda_{\max}(\alpha_t^2\bC_{pp})>\lambda_n(\bC_{xx}),
\end{equation}
where $\lambda_n(\bC_{xx})$ denotes the $n$-th largest eigenvalue of $\bC_{xx}$,
then at least one top-$n$ eigenvector has nonzero reservoir component, so
$\mathrm{SSO}_t>0$.
\end{proposition}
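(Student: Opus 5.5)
With $\bC_{xp}=\mathbf{0}$, the scaled covariance is block diagonal,
$\bC_t(\alpha_t)=\mathrm{diag}(\bC_{xx},\alpha_t^2\bC_{pp})$. The plan is to compare the spectrum of this matrix with the spectra of its two blocks and show that the top-$n$ eigenspace must contain a direction supported on the reservoir coordinates.

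\textbf{Step 1: eigenstructure of the block-diagonal matrix.}
Take an orthonormal eigenbasis of $\bC_{xx}$ in $\R^n$ and one of $\alpha_t^2\bC_{pp}$ in $\R^d$. Pad each with zeros to obtain vectors in $\R^{n+d}$; together they form an orthonormal eigenbasis of $\bC_t(\alpha_t)$. Consequently the spectrum of $\bC_t(\alpha_t)$ is the multiset union of the two block spectra. Each eigenspace splits as a direct sum of an $x$-supported part, contained in $\mathrm{range}(\mathbf{P}_x)$, and a $p$-supported part, contained in $\mathrm{range}(\mathbf{P}_p)$.

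\textbf{Step 2: counting eigenvalues.}
Let $\mu=\lambda_{\max}(\alpha_t^2\bC_{pp})$ and order the eigenvalues of $\bC_t(\alpha_t)$ as $\lambda_1\ge\dots\ge\lambda_{n+d}$. By hypothesis $\mu>\lambda_n(\bC_{xx})$. The eigenvalues of $\bC_t(\alpha_t)$ that are at least $\mu$ are:
\begin{itemize}
\item $\mu$ itself, contributed by the $p$-block;
\item at most $n-1$ eigenvalues from the $x$-block, since at most $\lambda_1(\bC_{xx}),\dots,\lambda_{n-1}(\bC_{xx})$ can reach $\mu$.
\end{itemize}
Hence $\mu$ lies within the top $n$ of the combined spectrum, i.e.\ $\lambda_n\le\mu$. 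Moreover, every $x$-block eigenvalue that is at least $\lambda_n$ is not below $\mu$ only if it is among the first $n-1$. In particular, $\lambda_n(\bC_{xx})<\mu$ together with $\lambda_n\le\mu$ shows that no $x$-block eigenvalue ties with $\mu$ beyond the first $n-1$ slots.

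\textbf{Step 3: conclusion, and the main obstacle.}
The delicate point is that $\mathbf{V}_n$ is not unique when $\lambda_n=\lambda_{n+1}$. The fix is to avoid choosing a basis and instead use the orthogonal projector onto the top-$n$ eigenspace, $\mathbf{V}_n\mathbf{V}_n^\top$. Because $\|\mathbf{P}_p\mathbf{V}_n\|_F^2=\tr(\mathbf{P}_p\mathbf{V}_n\mathbf{V}_n^\top)$, SSO depends only on this projector, which is determined whenever $\lambda_n>\lambda_{n+1}$. There are two cases.
\begin{itemize}
\item \emph{Generic case, $\lambda_n>\lambda_{n+1}$.} The top-$n$ space is spanned by eigenvectors with eigenvalue at least $\lambda_n$. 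Since $\mu\ge\lambda_n$, the padded top eigenvector $[\mathbf{0};\mathbf{v}]$ of the $p$-block lies in this space.
\item \emph{Tie case, $\lambda_n=\lambda_{n+1}$.} The eigenspace for the tied value can still be split into $x$- and $p$-parts by Step 1. I would show that any top-$n$ eigenspace containing $n$ vectors must include a $p$-direction, because the $x$-block supplies at most $n-1$ eigenvalues at least $\mu\ge\lambda_n$. The cleanest route is a dimension count: $\dim(\text{top-}n\text{ space}\cap\mathrm{range}(\mathbf{P}_x))\le n-1$. Making this count precise in the tie case is the step I expect to be the main obstacle.
\end{itemize}
In either case $\mathbf{P}_p\mathbf{V}_n\ne\mathbf{0}$, so $\mathrm{SSO}_t=\|\mathbf{P}_p\mathbf{V}_n\|_F^2/n>0$. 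Equivalently, at least one top-$n$ eigenvector has a nonzero reservoir component.
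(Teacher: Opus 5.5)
Your approach is the paper's. With $\bC_{xp}=\mathbf{0}$, the matrix $\bC_t(\alpha_t)=\mathrm{diag}(\bC_{xx},\alpha_t^2\bC_{pp})$ has spectrum equal to the union of the block spectra, and the strict inequality forces the top reservoir eigenvalue $\mu$ into the top $n$. Steps 1--2 and your generic case $\lambda_n>\lambda_{n+1}$ are correct. They are more careful than the paper's sketch, which treats the top-$n$ set as if it were uniquely defined.

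The remaining gap is the tie case. You say you ``would show'' $\dim(\text{top-}n\text{ space}\cap\mathrm{range}(\mathbf{P}_x))\le n-1$ and call it the main obstacle. You then conclude ``in either case $\mathbf{P}_p\mathbf{V}_n\ne\mathbf{0}$'' without proving it. The garbled ``Moreover\ldots'' sentence in Step 2 is never used and can be deleted.

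The missing step is short, and one argument covers every case and every admissible choice of $\mathbf{V}_n$, so you can drop the case split. (If $\mu>\lambda_n$, the vector $[\mathbf{0};\mathbf{v}]$ lies in an eigenspace that every top-$n$ space contains entirely, so only $\mu=\lambda_n=\lambda_{n+1}$ is delicate.) The general argument goes as follows.
\begin{enumerate}
\item Suppose $\mathbf{P}_p\mathbf{V}_n=\mathbf{0}$. Then the $n$ orthonormal columns of $\mathbf{V}_n$ lie in the $n$-dimensional space $\mathrm{range}(\mathbf{P}_x)$, so they span it.
\item Hence $\mathrm{diag}(\lambda_1,\dots,\lambda_n)=\mathbf{V}_n^\top\bC_t(\alpha_t)\mathbf{V}_n$ is orthogonally similar to $\bC_{xx}$. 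The retained eigenvalues are therefore exactly the spectrum of $\bC_{xx}$.
\item By your Step 1, the discarded eigenvalues are then exactly the spectrum of $\alpha_t^2\bC_{pp}$, which includes $\mu$.
\item Every discarded eigenvalue is at most $\lambda_n=\lambda_n(\bC_{xx})$. So $\mu\le\lambda_n(\bC_{xx})$, contradicting the hypothesis.
\end{enumerate}
Hence $\mathbf{P}_p\mathbf{V}_n\ne\mathbf{0}$ and $\mathrm{SSO}_t>0$, regardless of eigenvalue multiplicities. The fact your dimension count was missing is that $\mathrm{range}(\mathbf{P}_x)$ has dimension exactly $n$. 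So ``no reservoir component'' forces the retained space to \emph{equal} the passthrough space, and the retained spectrum to equal that of $\bC_{xx}$.
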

\begin{proof}[Sketch]
With $\bC_{xp}=\mathbf{0}$ in $\tilde{\bC}_t$,
$\bC_t(\alpha_t)=\mathrm{diag}(\bC_{xx},\alpha_t^2\bC_{pp})$,
so its spectrum is the union of the spectra of the two blocks. The stated
strict inequality implies at least one reservoir-block eigenvalue enters the
top-$n$ set, so the retained eigenspace has nonzero reservoir overlap.
\end{proof}

\emph{Observation (crowd-out in the block-diagonal case).}
Under $\bC_{xp}=\mathbf{0}$ in $\tilde{\bC}_t$, if
$\lambda_{\max}(\alpha_t^2\bC_{pp})>\lambda_n(\bC_{xx})$, at least one
reservoir-dominated direction enters the top-$n$ eigenspace and replaces a
passthrough direction, implying a decrease in $\rho_{x,t}$.
For $\bC_{xp}\neq\mathbf{0}$ in $\tilde{\bC}_t$, coupling in
$\bC_t(\alpha_t)$ perturbs this ordering.
With a non-negligible eigengap around $\lambda_n(\bC_{xx})$ and moderate
$\|\bC_{xp}\|$, Davis--Kahan-type bounds imply continuous variation from the
block-diagonal limit; crowd-out can still arise when reservoir-induced
variance becomes competitive in the top-$n$ spectrum.

\emph{Remark.}
The block-diagonal limit makes the mechanism explicit: top-$n$ selection is a
competition between passthrough and reservoir block spectra. In experiments,
normalized cross-block coherence is approximately constant across injection
strengths, so performance changes are driven mainly by relative block energies
rather than large coupling shifts.

\textbf{Interpretation.}
In the block-diagonal limit, SSO${>}0$ (Proposition~\ref{prop:rsi}) is necessary for reservoir
influence but not sufficient for improved separation: when reservoir
eigenvalues displace passthrough eigenvalues from the top-$n$ basis,
$\rho_x$ drops and SI-SDR degrades (\emph{crowd-out};
see Table~\ref{tab:mechanism} for quantification).
The guarded controller~\eqref{eq:rsi_ctrl} mitigates this by
adapting $\alpha_t$ to maintain $\rho_{x,t}$ near $\rho_x^\star$ in practice.

\subsection{Online Whitening and Natural Gradient ICA}

\textbf{Online whitening.}
For $\mathbf{u}_t=[\bx_t;\alpha_t\mathbf{p}_t]\in\R^{n+d}$, we update the mean
and covariance with forgetting factor $\lambda\in(0,1)$:
\begin{align}
\boldsymbol{\mu}_t &= \lambda\,\boldsymbol{\mu}_{t-1} + (1-\lambda)\,\mathbf{u}_t, \label{eq:mu}\\
\bC_t &= \lambda\,\bC_{t-1} + (1-\lambda)\,(\mathbf{u}_t-\boldsymbol{\mu}_t)(\mathbf{u}_t-\boldsymbol{\mu}_t)^\top. \label{eq:cov}
\end{align}
Before eigendecomposition we apply diagonal loading
$\bC_t\leftarrow\bC_t+\epsilon\bI$ ($\epsilon=10^{-6}$). Every $\Delta_w$
steps, we compute the top-$n$ eigenspace $\mathbf{V}_n$ and eigenvalues
$\mathbf{D}_n$, form $\bW_{\mathrm{wh}}=\mathbf{D}_n^{-1/2}\mathbf{V}_n^\top$,
and whiten
$\bz_t=\bW_{\mathrm{wh}}(\mathbf{u}_t-\boldsymbol{\mu}_t)\in\R^n$.
At each refresh we evaluate IER/SSO and $\rho_{x,t}$ using the current
top-$n$ eigenspace and update $\alpha_{t+1}$ via~\eqref{eq:rsi_ctrl};
$\alpha_t$ is held fixed between refreshes. Retaining top-$n$ eigenvectors
matches the source count and removes low-variance directions; RSI controls
whether reservoir directions remain in this retained subspace.

\textbf{Natural gradient ICA.}
On $\bz_t$, we update the demixing matrix $\bW_t\in\R^{n\times n}$ via the
natural gradient~\cite{Amari1998}:
\begin{equation}\label{eq:natgrad}
\bW_{t+1}=\bW_t+\eta\bigl(\bI_n-\phi(\by_t)\by_t^\top\bigr)\bW_t,\qquad
\by_t=\bW_t\bz_t,
\end{equation}
where $\phi=\tanh$ and $\eta>0$. Every $\Delta_o$ steps we apply
$\bW\leftarrow(\bW\bW^\top)^{-1/2}\bW$ to prevent drift.

\textbf{Algorithm 1 (RSI-controlled RE-OICA).}
Per sample: update $\br_t,\mathbf{p}_t$; form $\mathbf{u}_t=[\bx_t;\alpha_t\mathbf{p}_t]$;
on refresh recompute top-$n$ whitening, evaluate IER/SSO/$\rho_{x,t}$, and update
$\alpha_{t+1}$ via~\eqref{eq:rsi_ctrl}; whiten to $\bz_t$; apply~\eqref{eq:natgrad}.
\begin{figure*}[t]
  \centering
  \includegraphics[width=0.95\textwidth]{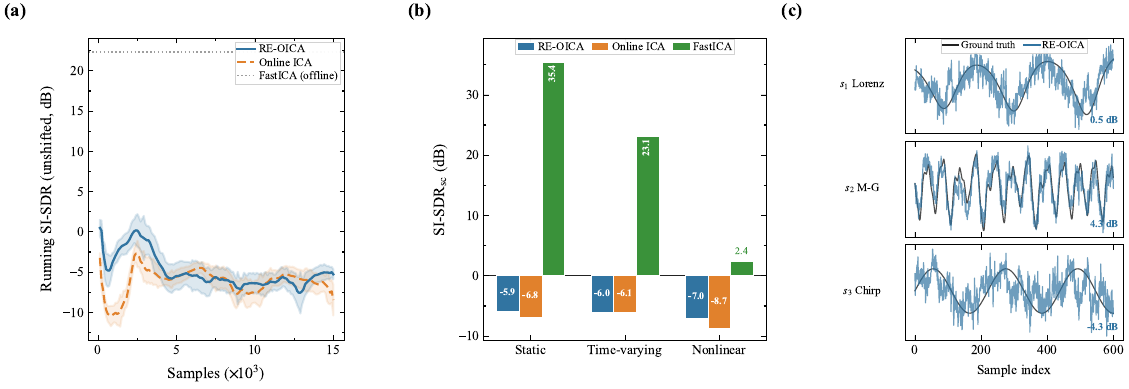}
  \caption{RE-OICA validation ($T{=}15{,}000$, $n{=}3$, 10 seeds). (a) Running unshifted SI-SDR (trailing 2\,000): RE-OICA, vanilla, FastICA. (b) Steady-state SI-SDR$_{\mathrm{sc}}$ across regimes (lag-compensated). (c) Nonlinear overlay (last 600): true vs.\ RE-OICA; corner labels show per-source unshifted SI-SDR.}
  \label{fig:fig1}
\end{figure*}
\section{Experiments}

\textbf{Sources and mixing.}
We use $n{=}3$ standardized sources: Lorenz ($x$-component),
Mackey--Glass ($\tau{=}17$), and a linear chirp (0.5--5\,Hz over 10\,s).
Mixing follows~\eqref{eq:static}--\eqref{eq:nl}: static ($\bA$ fixed),
time-varying ($\varepsilon{=}0.3$, $f{=}0.5$\,Hz), and nonlinear
($g{=}\tanh(\gamma\cdot)$, $\gamma{=}0.8$, SNR$=10$\,dB with additive white Gaussian noise (AWGN)).
Mixing matrices are random with condition numbers $\le 5$.

\textbf{Methods.}
We compare RE-OICA-base (ESN reservoir: $N{=}500$, sparsity $5\%$, $\rho{=}0.95$,
$\alpha_r{=}0.1$, $c_{\mathrm{in}}{=}0.1$ input scaling, fixed random
$\bW_{\mathrm{read}}\in\R^{d\times N}$ with $d{=}20$, baseline $c_N{=}1/N$),
RE-OICA-sqrt ($c_N{=}1/\sqrt{N}$), RE-OICA-RSI (guarded~\eqref{eq:rsi_ctrl}
with $\mathrm{IER}^\star{=}0.25$, $\rho_x^\star{=}0.95$, $\kappa{=}0.3$,
$\kappa_g{=}3$, $\alpha_t\in[0.1,10]$), an unguarded RSI counterfactual
($\mathrm{IER}^\star{=}0.25$, no $\rho_x$ guard), and baselines vanilla online
ICA and FastICA.
All online methods share the same backend: EMA whitening ($\lambda{=}0.995$,
$\Delta_w{=}64$) and natural-gradient ICA ($\eta{=}5{\times}10^{-3}$,
$\Delta_o{=}50$). \emph{Vanilla online ICA} (ORICA-inspired) applies this
backend directly to $\bx_t$ without reservoir expansion (same
natural-gradient update as ORICA~\cite{Hsu2014ORICA}, but EMA plus periodic
eigendecomposition whitening rather than ORICA's RLS whitening).
\emph{FastICA}~\cite{Hyvarinen1999} is the offline reference
(full batch, \texttt{logcosh} nonlinearity; cf.~\cite{Hyvarinen2001,Miettinen2017SquaredFastICA}).
Online runs use a 1000-sample whitening warm-up (ICA frozen) and 2000-sample
learning-rate ramp; Fig.~\ref{fig:fig1}(a) reports 10-seed
mean$\pm$standard error of the mean (SEM) running unshifted SI-SDR.

\textbf{Metrics and alignment.}
On the last 5{,}000 samples, we compute lag-aware correlations
$\rho_{ij}=\max_{|\tau|\le 200}|\mathrm{corr}(s_i, y_j^{(\tau)})|$ and use Hungarian matching on $[\rho_{ij}]$ to obtain a permutation $\pi$; matched outputs are evaluated at their maximizing lags on the overlap.
We report SI-SDR$_{\mathrm{sc}}$~\cite{LeRoux2019SISDR,Vincent2006BSSEval}, mean $|r|$, and RSI diagnostics (IER/SSO and $\rho_x$).
Running curves show trailing 2{,}000-sample \emph{unshifted} SI-SDR, whereas steady-state SI-SDR$_{\mathrm{sc}}$ is lag-compensated.
Results are 10-seed mean$\pm$SEM; counts are descriptive.

\begin{table}[t]
\centering
\caption{Last 5\,000 samples (10 seeds, mean$\pm$SEM): SI-SDR$_{\mathrm{sc}}$ (dB), mean $|r|$, count (RE-OICA $>$ vanilla).}
\label{tab:results}
\setlength{\tabcolsep}{3pt}
\footnotesize
\begin{tabular}{l l r@{$\,\pm\,$}l r@{$\,\pm\,$}l c}
\hline
Regime & Method & \multicolumn{2}{c}{SI-SDR$_{\mathrm{sc}}$}
 & \multicolumn{2}{c}{Corr} & Count \\
\hline
Static      & RE-OICA &$-5.9$&$1.0$ &$0.44$&$0.04$ & 6/10 \\
            & Vanilla &$-6.8$&$1.0$ &$0.41$&$0.04$ &     \\
\hline
Time-var.   & RE-OICA &$-6.0$&$1.1$
             &$0.44$&$0.04$ & 7/10 \\
            & Vanilla &$-6.1$&$1.1$ &$0.44$&$0.04$ &     \\
\hline
Nonlinear   & RE-OICA &$\mathbf{-7.0}$&$\mathbf{1.1}$
             &$\mathbf{0.41}$&$\mathbf{0.04}$ & 7/10 \\
            & Vanilla &$-8.7$&$1.0$ &$0.35$&$0.03$ &     \\
\hline
\end{tabular}
\vspace{2pt}

{\scriptsize FastICA (batch): static $35.4/1.00$, time-var.\ $23.1/1.00$,
nonlinear $2.4/0.78$ (SI-SDR$_{\mathrm{sc}}$/Corr).}
\end{table}

\textbf{Results (Fig.~\ref{fig:fig1}, Table~\ref{tab:results}).}
\emph{(a) Convergence.} Under time-varying mixing, Fig.~\ref{fig:fig1}(a)
shows 10-seed mean$\pm$SEM running unshifted SI-SDR; both methods stabilize
after ${\sim}3{,}000$ samples.

\emph{(b) Cross-regime comparison (Table~\ref{tab:results}).}
Under this protocol, RE-OICA outperforms vanilla in all three regimes, with
the largest gain under nonlinear mixing ($+1.7$\,dB; Table~\ref{tab:results}).
FastICA is an offline reference in this benchmark; online
SI-SDR$_{\mathrm{sc}}$ is negative on these chaotic/oscillatory stress-test
sources, where even batch FastICA reaches only $+2.4$\,dB under nonlinear mixing.
Win counts are descriptive rather than inferential.

\emph{Standard BSS validation.}
On standard super-Gaussian sources (Laplace, square wave, sawtooth;
$T{=}30{,}000$; same mixing-matrix protocol with condition number ${\le}5$ and
same whitening/ICA hyperparameters), RE-OICA-base ($1/N$ scaling) achieves
$+0.6{\pm}1.2$\,dB vs.\ $-2.9{\pm}2.3$\,dB for vanilla
($\Delta{=}{+}3.5$\,dB, 7/10 seeds).
Under mild nonlinearity ($g{=}\tanh(0.5\cdot)$, SNR$=20$\,dB), RE-OICA-base
achieves $-1.5{\pm}1.0$\,dB vs.\ $-3.6{\pm}1.4$\,dB
($\Delta{=}{+}2.1$\,dB, 9/10 seeds).
Negative chaotic-source values are therefore consistent with task difficulty
under this protocol.

\emph{(c) Waveform quality.} Overlay (nonlinear, last 600 samples) shows
RE-OICA tracks Lorenz and chirp dynamics qualitatively; Mackey--Glass remains
the most difficult source (Fig.~\ref{fig:fig1}(c)).

\emph{Computational cost.}
Per-sample cost is dominated by reservoir encoding (sparse
$\bW_{\mathrm{res}}$ multiply, $O(\mathrm{nnz})$ with
$\mathrm{nnz}=\mathrm{nnz}(\bW_{\mathrm{res}})$); the
$O((d{+}n)^3/\Delta_w)$ amortized eigendecomposition and $O(n^2)$ ICA
update are modest for $d{+}n{=}23$, $n{=}3$ in our implementation setting.

\textbf{Ablation studies (Fig.~\ref{fig:fig2}, Table~\ref{tab:mechanism}).}
All ablations follow the same protocol (10 seeds) and SI-SDR$_{\mathrm{sc}}$
evaluation.

\emph{(a) $N$-sweep and RSI mechanism (Fig.~\ref{fig:fig2}(a),
Table~\ref{tab:mechanism}).}
Under baseline $1/N$ scaling, SI-SDR$_{\mathrm{sc}}$ is nearly flat across $N$
with very low IER and $\rho_x{\approx}1$.
Unconstrained $1/\sqrt{N}$ scaling increases IER but degrades SI-SDR$_{\mathrm{sc}}$
via crowd-out, while the guarded RSI branch recovers near-baseline performance
(Table~\ref{tab:mechanism}).

\emph{(b) Architecture (Fig.~\ref{fig:fig2}(b)).}
Recurrent ESN and memoryless random features (RF) perform similarly and are
slightly above vanilla under time-varying mixing ($\Delta{=}{+}0.1$--$0.2$\,dB),
suggesting that, in this benchmark, high-dimensional nonlinear expansion is
likely the primary contributor to the gain rather than fading memory.

\emph{(c) Drift strength (Fig.~\ref{fig:fig2}(c)).}
Across $\varepsilon\in\{0.1,0.3,0.8\}$, RE-OICA remains comparable to or slightly
above vanilla ($\Delta{=}{+}0.1$--$0.8$\,dB) with no monotonic dependence on
$\varepsilon$.

Additional ablations on readout dimension, passthrough removal, and nonlinearity
type showed qualitatively consistent trends (results not shown due to space
constraints).

\begin{table}[t]
\centering
\caption{Nonlinear (10 seeds): SI-SDR$_{\mathrm{sc}}$ (dB, mean$\pm$SEM), IER/$\rho_x$ steady-state, count vs.\ RE-OICA-base.}
\label{tab:mechanism}
\setlength{\tabcolsep}{3pt}
\scriptsize
\begin{tabular}{l r@{$\,\pm\,$}l c c c}
\hline
Branch & \multicolumn{2}{c}{SI-SDR$_{\mathrm{sc}}$} & IER & $\rho_x$ & Count \\
\hline
RE-OICA-base ($1/N$)     &$-7.0$&$1.1$ & ${<}0.001$ & $1.00$ & (ref) \\
RE-OICA-sqrt ($1/\!\sqrt{N}$) &$-8.7$&$1.0$ & $0.016$ & $0.94$ & 2/10 \\
RSI-unguarded (counterfactual, $\mathrm{IER}^\star\!{=}0.25$) &$-9.2$&$1.0$ & $0.255$ & $0.77$ & 2/10 \\
RE-OICA-RSI (guarded, $\rho_x^\star\!{=}0.95$)  &$-7.1$&$1.1$ & $0.007$ & $0.98$ & 5/10 \\
\hline
\end{tabular}
\end{table}

\textbf{Mechanism summary (Table~\ref{tab:mechanism}).}
RSI diagnostics reveal a crowd-out tradeoff: increasing IER can reduce passthrough retention $\rho_x$ and degrade SI-SDR$_{\mathrm{sc}}$, while the guarded controller preserves $\rho_x$ and recovers near-baseline performance. Cross-block coherence remains stable ($0.38$--$0.45$), suggesting increased reservoir-block energy $\|\bC_{pp}\|$ rather than coupling changes.

\begin{figure}[t]
  \centering
  \includegraphics[width=\columnwidth]{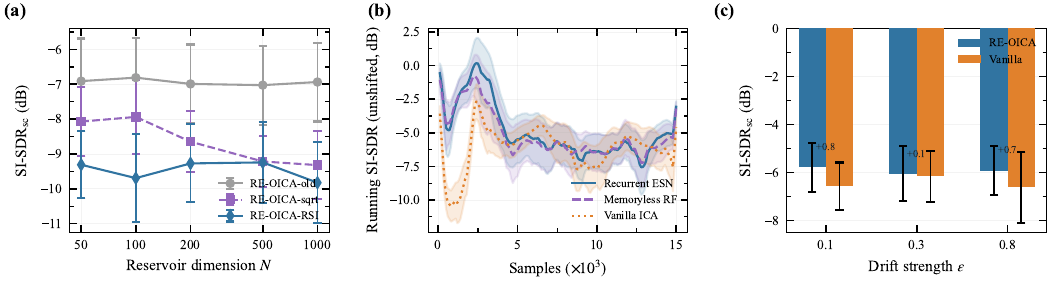}
  \caption{Ablation studies (10 seeds, mean$\pm$SEM).
  \textbf{(a)} $N$-sweep under nonlinear mixing across three scaling branches.
  \textbf{(b)} Architecture: ESN vs.\ RF vs.\ vanilla (time-varying).
  \textbf{(c)} Drift sweep: $\varepsilon\in\{0.1,0.3,0.8\}$.}
  \label{fig:fig2}
\end{figure}

\section{Conclusion}

Top-$n$ whitening makes reservoir expansion a rank-budget problem: features
outside the retained eigenspace cannot help. RSI diagnostics (IER, SSO, $\rho_x$)
predict outcome; stronger injection can raise IER yet reduce $\rho_x$ and SI-SDR
(Table~\ref{tab:mechanism}). A low-overhead guarded controller maintains
$\rho_x\!\approx\!\rho_x^\star$ and recovers mean performance to within $0.1$\,dB
of baseline $1/N$ scaling under our protocol. Future work should increase
\emph{useful} cross-block structure rather than merely $\|\bC_{pp}\|$.

\bibliographystyle{IEEEtran}
\bibliography{ref}

\end{document}